\newtheorem{proposition}{Proposition}
\newcommand{\st}{{s.t. }}
\begin{document}
\title{
Statistical Structure Learning, Towards a Robust Smart Grid
}

\author{Hanie Sedghi and Edmond Jonckheere\\
Department of Electrical Engineering\\
University of Southern California\\
Los Angeles, California 90089-2563\\
Email: \{hsedghi,~jonckhee\}@usc.edu}

\date{}
\maketitle

\begin{abstract}
Robust control and maintenance of the grid relies on accurate data. 
Both PMUs and state estimators are prone to false data injection attacks. Thus, it is crucial to have a mechanism for fast and accurate detection of an agent maliciously tampering with the data---for both preventing attacks that may lead to blackouts, 
and for routine monitoring and control tasks of current and future grids. 
We propose a decentralized false data injection detection scheme based on Markov graph of the bus phase angles. 
We utilize the Conditional Covariance Test (CCT) to learn the structure of the grid.  
Using the DC power flow model, we show that under normal circumstances, and because of walk-summability of the grid graph, the Markov graph of the voltage angles can be determined by the power grid graph. 
Therefore, a discrepancy between calculated Markov graph and learned structure should trigger the alarm. 
Local grid topology is available online from the protection system and we exploit it to check for mismatch. 
Should a mismatch be detected, we use correlation anomaly score to detect the set of attacked nodes.
Our method can detect the most recent stealthy deception attack on the power grid that assumes knowledge of bus-branch model of the system and is capable of deceiving the state estimator, damaging power network observatory, control, monitoring, demand response and pricing schemes~\cite{kosut2010malicious}.
Specifically, under the stealthy deception attack, the Markov graph of phase angles changes. In addition to detect a state of attack, our method can detect the set of attacked nodes.
To the best of our knowledge, our remedy is the first to comprehensively detect this sophisticated attack and it does not need additional hardware. Moreover, our detection scheme is successful no matter the size of the attacked subset.
Simulation of various power networks confirms our claims.
%
\end{abstract}
%
%
\section{Introduction} 

Synchronous Phasor Measurement Units (PMUs) are being massively deployed throughout the grid 
and provide the dispatcher with time-stamped measurements relevant to the state of grid health as well as the required data for controlling and monitoring the system. Currently, PMU's provide the fastest measurements of grid status. 
As a result, recent monitoring and control schemes rely primarily on PMU measurements. 
For example,~\cite{diao} tries to increase voltage resilience to avoid voltage collapse by using synchronized PMU measurements and decision trees. 
In addition,
\cite{Giannakis,Wiesel,hezhangJ}~
rely on phase angle measurements for fault detection and localization.
Nevertheless, we need to consider that it is not economically feasible to place PMUs in every node. Therefore, in some nodes in the system, State Estimators will still be used. PMUs are prone to false data injection attack and even if we do not consider that, part of the grid using the state estimators is the back door to false data injection attacks. Therefore, aforementioned methods 
 can be deluded by false data injection attack. Thus, it is crucial to have a mechanism  for fast and accurate discovery of malicious tampering; both for preventing the attacks that may lead to blackouts, and for routine monitoring and control tasks of the smart grid, including state estimation and optimal power flow. It should be noted that our immunization scheme does not depend on use of PMU's in the network and it is applicable both for current and future grid.\\

\subsection{Summary of Results and Related Work}
We have designed a decentralized false data injection attack detection mechanism that utilizes bus phase angles Markov graph. We utilize Conditional Covariance Test (CCT)~\cite{AnimaW} to learn the structure of smart grid. 
We show that under normal circumstances, and because of the grid structure, the Markov graph of voltage angles can be determined by the power grid graph; Therefore, a discrepancy between calculated Markov graph and learned structure triggers the alarm.
\begin{wrapfigure}{r}{.3\textwidth}
  \centering
\includegraphics[width=2in]{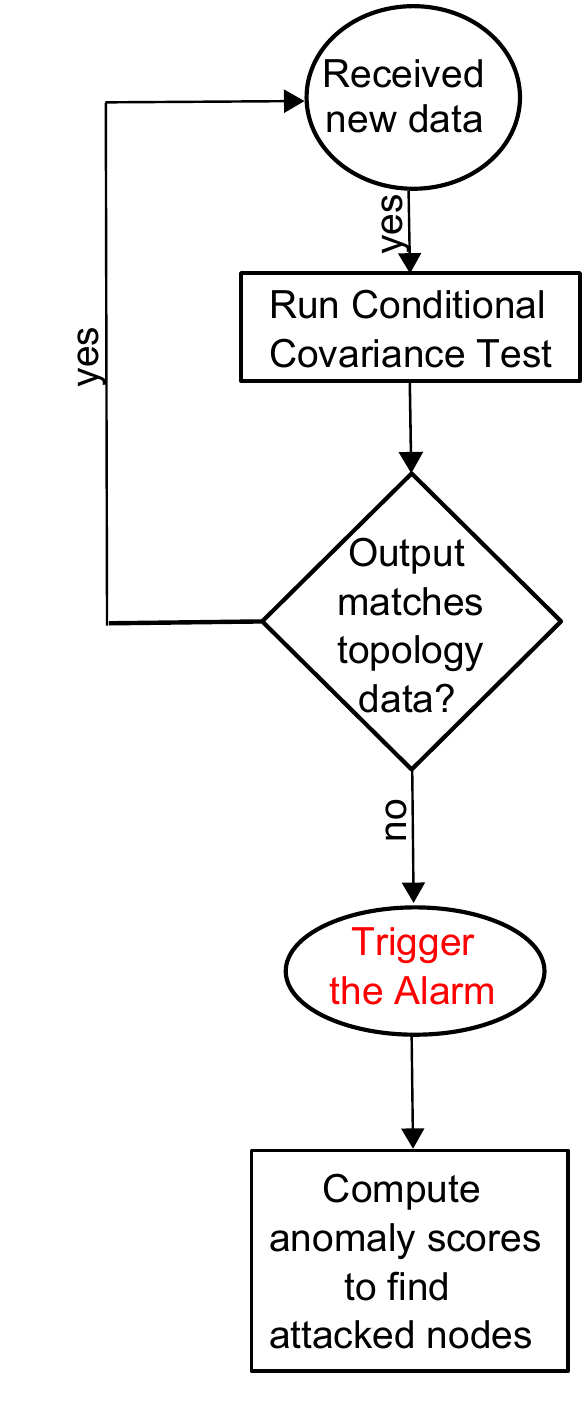}
\caption{\small Flowchart of our detection algorithm}
\label{fig:flowchart}
\end{wrapfigure}
Because of the connection between Markov graph of bus angle measurements and grid topology, our method can be implemented in a decentralized manner, i.e. at each sub-network. Currently, sub-network topology is available online and global network structure is available hourly~\cite{Giannakis}. 
Not only by decentralization can we increase the speed and get closer to online detection, 
but we also increase accuracy and stability by avoiding communication delays and synchronization problems when trying to send measurement data between locations far apart. 
We noticeably decrease the amount of exchanged data to address privacy concerns as much as possible.\\
We show that our method can detect the most recently designed attack on the power grid that can deceive the State Estimator~\cite{Teixreia2011}. The attacker is equipped with vital data and assumes the knowledge of bus-branch model of the grid. To the best of our knowledge, our method is the first to detect such a sophisticated attack comprehensively and efficiently with any number of attacked nodes. It should be noted that our method can detect that the system is under attack as well as the set of nodes under the attack. The flowchart is shown in Figure~\ref{fig:flowchart}.
%

Although the authors of~\cite{berkeley} suggest an algorithm for PMU placement such that this attack is observable, they only claim an algorithm for 2-node attack and empirical approaches for 3, 4, 5-node attacks. According to~\cite{berkeley}, for cases where more than two nodes are under attack the complexity of the approach is said to be \textit{``disheartening."} Considering the fact that finding the number of needed PMUs is NP hard and that~\cite{berkeley} gives an upper bound and use a heuristic method for PMU placement, we need to mention that our algorithm has no hardware requirements, the complexity does not depend on the number of nodes under attack and it works for any number of attacked nodes. It is worth mentioning that even in the original paper presenting the attack for a relatively small network (IEEE-30), seven measurements from five nodes are manipulated. So it seems that the 2-node attack is not the most probable one.\\
There has been another line of work towards computing the ``security index" for different nodes to find the set of nodes that are more vulnerable to false data injection attacks~\cite{Sandberg}. Although these attempts are appreciated, our method differs greatly from such perspectives as such methods do not detect the attack state when it happens and they cannot find the set of attack nodes.\\
Dependency graph approach is used in~\cite{hezhangJ} for topology fault detection in the grid. However, since attacks on state estimators are not considered, such methods can be deceived by false data injection. Furthermore,~\cite{hezhangJ} uses a constrained maximum likelihood optimization for finding the information matrix while here an advanced structure learning method is used that captures the power grid structure better. This is because in the power grid the edges are distributed over the network. This is discussed in section \ref{sec:CCT}.\\
In addition, we show that our method can detect the case where the attacker manipulates reactive power data to lead the state estimator to wrong estimates of voltage. Such an attack can be designed to fake a voltage collapse or tricking the operator to cause a voltage collapse. The detection can be done by linearisation of AC power flow and considering the fluctuations around steady state. Then following our algorithm, it readily follows that such an attack can also be detected following similar approach as we do here for bus phase angles and active power.\\
\paragraph{Paper Outline: } The paper is organized as follows. In section $2$,  we show that bus phase angles form a Gaussian Markov Random Field (GMRF) and discuss that their Markov graph can be determined by the grid structure.
In section $3$, we explain the Conditional Covariance Test (CCT)~\cite{AnimaW}, which we use for obtaining the Markov graph between bus phase angles, and discuss how we leverage it to perform optimally for the power grid.
The stealthy deception attack on the State Estimator is introduced in Section $4$. We elaborate on our detection scheme in Section $5$. Simulations are presented  in Section $6$. We elaborate on the reactive power counterpart in Section $7$ and Section $8$ concludes the paper.

\section{Preliminaries and problem formulation}
\subsection{Preliminaries}
A Gaussian Markov Random Field (GMRF) is a family of jointly Gaussian distributions, which factor
according to a given graph. Given a graph $G = (V,E)$, with $V = \{1, . . . , p\}$, consider
a vector of Gaussian random variables $X = [X_1,X_2, . . . ,X_p]^T$ , where each node $i\in V$ is
associated with a scalar Gaussian random variable $X_i$. A Gaussian Markov Random Field 
on $G$ has a probability density function (pdf) that can be parametrized as
\begin{align}
\label{user prob_distr}
f_X(x)\propto exp[-\frac{1}{2} x^TJx+h^Tx], 
\end{align}
where $J$ is a positive-definite symmetric matrix whose sparsity pattern corresponds to that
of the graph $G$. More precisely,
\begin{align*}
J(i,j)=0 \Longleftrightarrow (i,j)\notin E.
\end{align*}
The matrix $J = \Sigma  ^ {-1}$ is known as the \textit{potential} or \textit{information} matrix, the non-zero entries $J(i, j)$
as the edge potentials, and the vector $h$ as the vertex potential vector. In general, Graph $G=(V,E)$ is called the Markov graph (graphical model) underlying the joint probability distribution $f_X(x)$, 
where node set $V$ represents random variable set $\{X_i\}$ and the edge set $E$ is defined in order to satisfy local Markov property.
For a Markov Random Field, local Markov property states that  $X_i \perp {X}_{-\lbrace i,N(i)\rbrace}|X_{N(i)}$, 
where $X_{N(i)}$ represents all random variables associated with the neighbors of $i$ in graph $G$ 
and ${X}_{-\lbrace i,N(i)\rbrace}$ denotes all variables except for $X_i$ and $ X_{N(i)}$.

  




\subsection{Bus phase angles GMRF} \label{sec:Bmatrix}
We now apply the preceding to bus phase angles. 
The DC power flow model~\cite{state}
is often used for analysis of power systems in normal operations. When the system is stable, the phase angle differences are small, so $\sin(\theta_i-\theta_j) \sim \theta_i-\theta_j $. 
By the DC power flow model, system state $X$ can be described using bus phase angles. 
The active power flow on the transmission line connecting bus $i$ to bus $j$ is given by
\begin{align}
\label{one}
P_{ij} = b_{ij}(X_i - X_j),
\end{align}
where $X_i$ and $X_j$ denote the phasor angles at bus $i$ and
$j$ respectively, and $b_{ij}$ denotes the inverse of the line inductive reactance. 
The power injected at bus $i$ equals the algebraic sum of the powers flowing away from bus $i$:
\begin{align}
\label{two}
P_{i} =\sum_{j\neq i}P_{ij}=\sum_{j\neq i} b_{ij}(X_i - X_j).
\end{align}
When buses $i$ and $j$ are not connected, $b_{ij} = 0$. Thus, it follows that the phasor angle at bus $i$ could be represented as 
\begin{align}
\label{fund}
X_{i} =\sum_{j\neq i}\lbrace\frac{b_{ij}}{\sum_{i\neq j}b_{ij}}\rbrace X_j+\frac{1}{\sum_{j\neq i} b_{ij}} P_i.
\end{align}
Equation~\eqref{one} can also be rewritten in matrix form as
\begin{align}
\label{PBX}
P=BX,
\end{align}
where $P=[P_1,P_2,...,P_p]$ is the vector of injected active powers, $X=[X_1,X_2,...,X_p]$ is the vector of bus phase angles and 
\begin{align}
\label{Bdef}
B=\left\lbrace \begin{array}{lcl}
-b_{ij} & \mbox{if} & i\neq j, \\
\sum_{j\neq i} b_{ij} & \mbox{if} &i=j.
\end{array}
\right.
\end{align}
\paragraph{Remark:} Note that, because of linearity of the DC power flow model, the above equations are valid for both the phase angles $X$ and powers injected $P$ and for {\it fluctuations} of phase angles $X$ and powers injected $P$ around their steady-state values. Specifically, if we let $\widetilde{P}$ refer to the vector of active power fluctuations and $\widetilde{X}$ to the vector of phase angles fluctuations, we have $\widetilde{P}=B\widetilde{X}$. In the sequel, the focus is on the DC power flow equations.  Nevertheless, our analysis remains valid if we consider {\it fluctuations} around the steady-state values.

Because of load uncertainty, an injected power can be modeled as a random variable~\cite{Luettgen1993} and since an injected power models the superposition of many independent factors (e.g., loads, fluctuations of power outputs of wind turbines, etc.), it can be modeled as a Gaussian random variable. Thus, the linear relationship in \eqref{PBX} implies  that the difference of phasor angles across
a line could be approximated by a Gaussian random variable truncated within $[0, 2\pi)$. Considering the fixed phasor at the slack bus, 
it can be assumed that under steady-state conditions, phasor angle measurements are Gaussian random variables~\cite{hezhangJ}.

The next step is to find whether $X_i$'s satisfy local Markov property and, in the affirmative, discover the neighbor sets corresponding to each node. We do this by analyzing Equation~\eqref{fund}. If there were only the first term, we would conclude that set of nodes electrically connected to node $i$ satisfies the local Markov property, but the second term makes a difference. 
Below we argue that an analysis of the second term of \eqref{fund} shows that this term causes some second-neighbors of $X_i$ to have a nonzero term in the $J$ matrix. In addition, for nodes that are more than two hop distance apart, $J_{ij}=0$. Therefore, as opposed to the claim in~\cite{hezhangJ}, a second-neighbor relationship {\it does exist} in the $J$-matrix. \\
As stated earlier, power injection at different buses have Gaussian distribution. We can assume that they are independent and without loss of generality they are zero mean. Therefore, the probability distribution function for $P$ vector is
\begin{align*}
f_P(P) \propto e^{-\frac{1}{2}P^T P}
\end{align*}
Since $P=BX$, we have 
\begin{align*}
f_X(X) \propto e^{-\frac{1}{2}X^T B^T B X}
\end{align*}
Recalling the definition of probability distribution function for jointly Gaussian r.v. s in~\eqref{user prob_distr}, we get $J=B^T B$. Let $d(i,j)$ represent the hop distance between nodes $i$, $j$. Obviously, by definition of $B$ matrix, this leads to some nonzero $J_{ij}$ entries for $d(i,j)=2$. In addition, we state that
\begin{proposition}
Let $d(i,j)$ denote the hop distance between nodes $i$, $j$ on the power grid graph $G$. 
\\Assume that the fluctuations of the powers injected at the nodes are Gaussian and mutually independent. Then 
\begin{align*}
J_{ij}=0, \quad \quad \forall ~ d(i,j) >2.
\end{align*}
\end{proposition}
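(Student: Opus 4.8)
The plan is to read off the sparsity pattern of $J$ directly from the factorization $J = B^{T}B$ established above, together with the defining formula \eqref{Bdef} for $B$; no probabilistic input beyond that factorization is needed. First I would record the sparsity of $B$ itself. Since the line parameter is independent of orientation, $b_{ik}=b_{ki}$, so $B$ is symmetric and hence $J = B^{T}B = B^{2}$, giving $J_{ij} = \sum_{k} B_{ik}B_{kj}$. From \eqref{Bdef} together with the convention $b_{ik}=0$ for non-adjacent buses, the entry $B_{ik}$ is nonzero only when $k=i$ or $(i,k)\in E$; in either case $d(i,k)\le 1$. This single structural fact about the support of $B$ drives the whole argument.

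Next I would argue by contraposition. Fix nodes $i,j$ with $d(i,j)>2$ and consider an arbitrary summation index $k$. If both $B_{ik}\neq 0$ and $B_{kj}\neq 0$, then the support description above forces $d(i,k)\le 1$ and $d(k,j)\le 1$, and the triangle inequality for hop distance yields $d(i,j)\le d(i,k)+d(k,j)\le 2$, contradicting $d(i,j)>2$. Hence for every $k$ at least one of the two factors vanishes, so each product $B_{ik}B_{kj}=0$, and therefore $J_{ij}=\sum_{k}B_{ik}B_{kj}=0$, which is exactly the assertion.

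I expect no serious obstacle here: once the factorization $J=B^{T}B$ is granted, the statement is a purely one-sided support-containment fact, so no cancellation analysis is required. The only point worth flagging is precisely the asymmetry with the companion remark made just before the proposition: showing that $J_{ij}$ \emph{can} be nonzero when $d(i,j)=2$ is genuinely harder, since one would then have to verify that the surviving summands $B_{ik}B_{kj}$ do not cancel. The proposition as stated, however, concerns only $d(i,j)>2$, a regime in which no such summand exists, so the conclusion follows immediately and robustly, independent of the numerical values of the $b_{ij}$.
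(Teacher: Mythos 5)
Your proof is correct and follows essentially the same route as the paper's: both rest on the factorization $J=B^{T}B$, the observation that $B_{ik}\neq 0$ forces $d(i,k)\le 1$, and the triangle inequality to rule out any surviving summand $B_{ik}B_{kj}$ when $d(i,j)>2$. If anything, your version is marginally more careful, since the paper's phrasing ``$B_{ik}\neq 0$ implies $d(i,k)=1$'' overlooks the nonzero diagonal entries $B_{ii}$, which your support description $d(i,k)\le 1$ handles correctly without affecting the conclusion.
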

\begin{proof}
We argue by contradiction. 
Assume $J_{ij} \neq 0$ for some $d(i,j) >2$. Since $J_{ij}=\sum_k B_{ik}B_{jk}$, it follows that 
$\exists ~ k ~\st ~B_{ik}\neq 0, B_{jk} \neq 0$. By~\eqref{Bdef}, $B_{ik} \ne 0$ implies $d(i,k)=1$. 
From there on,  the triangle inequality implies that 
$d(i,j) \leq d(i,k)+d(k,j)=1+1=2$,   
which contradicts the assumption $d(i,j) >2$.
\end{proof}
It was shown in~\cite{sedghi:bookchapter} that for some graphs, the second-neighbor terms are smaller than the terms corresponding to the immediate electrical neighbors of $X_i$. More precisely, it was shown that for lattice structured grids, 
this approximation falls under the generic fact of the tapering off of Fourier coefficients~\cite{sedghi:bookchapter}. 
Therefore, we can approximate each neighborhood to the immediate electrical neighbors only. 
We can also proceed with the exact relationships. 
For simplicity, we opt the first-neighbor analysis. We explain shortly why CCT best describes this approximation.\\
Note that our detection method relies on graphical model of the variables. It is based on the fact that the Markov graph of bus phase angles changes under an attack. CCT is tuned with correct data and we prove that in case of the attack, Markov graph of compromised data does not follow the Markov graph of correct data. Hence, we can tune CCT by either exact relationships or approximate Markov graph. In both cases, the output in case of attack is different from the output tuned with correct data.
Therefore, it works for both approximate and exact neighborhoods. 

\section{Structure Learning}
In the context of graphical models, model selection means finding the exact underlying Markov graph among a group of random variables based on samples of those random variables.
There are two main class of methods for learning the structure of the underlying graphical model, convex and non-convex methods. $\ell_1$-regularized maximum likelihood estimators are the main class of convex methods ~\cite{Friedman&etal:07,Ravikumar&etal:08Arxiv,JanzaminAnandkumar:CovDecomp2012ArXiv}. In these methods, the inverse covariance matrix is penalized with a convex $\ell_1$-regularizer in order to encourage sparsity in the estimated Markov graph structure. Other types of methods are the non-convex or greedy methods ~\cite{AnimaW}.
As we are faced with GMRF in our problem, it would be useful to exploit one of these structure learning methods.
\subsection{Conditional Covariance Test} \label{sec:CCT}

In order to learn the structure of the power grid, we utilize the new Gaussian Graphical Model Selection method called {\it Conditional Covariance Test (CCT)}~\cite{AnimaW}. CCT method estimates the structure of underlying graphical model given i.i.d. samples of the random variables.
CCT method is shown in Algorithm~\ref{CCT}. 

\begin{algorithm}[t]
\caption{$CCT(x^n; \xi_{n,p},\eta)$ for structure learning using samples $x^n$~\cite{AnimaW}} 
\label{CCT}
\begin{algorithmic}
\State \textbf{Initialize} $\widehat{G}^n_p=(V,\emptyset)$\\
\State For each $(i,j) \in V^2$,
\If
{~~~~~~~~~$\min_{\substack{{S \subset V\setminus \{i,j\}}\\{|S| \leq \eta}}} \widehat{\Sigma}(i, j|S) > \xi_{n,p}$,\\}
\State  add $(i,j)$ to the edge set of $\widehat{G}^n_p$.\\
\EndIf
\State \textbf{Output}:$\widehat{G}^n_p$
\end{algorithmic}
\end{algorithm}
In Algorithm~\ref{CCT}, the output is an edge set corresponding to graph $G$ given                   
$n$ i.i.d. samples $x^n$, each of which has $p$ variables, a threshold $\xi_{n,p}$ (that depends on both $p$ and $n$) and a constant $\eta \in \mathbb{N}$, which is related to the local vertex separation property (described later). In our case, each bus phase angle represents one of the $p$ variables.\\
The sufficient condition for output of CCT to have structural consistency with the underlying Markov graph between variables is that the graph has to satisfy
local separation property and  walk-summability~\cite{AnimaW}. An ensemble of graphs has the $(\eta,\gamma)$-local separation property if 
 for any  $(i,j) \notin E(G)$, the maximum number of paths between $i$,$j$ of length at most $\gamma$ does not exceed $\eta$. A Gaussian model is said to be $\alpha$-walk summable
if 
$||\bar{\textbf{R}}|| \leq \alpha < 1$
where $\bar{\textbf{R}}=[|r_{ij}|]$ and $||.||$ denotes the spectral or 2-norm of matrix, which for symmetric matrices is given by the maximum absolute eigenvalue~\cite{AnimaW}. $ \textbf{R} $ is the matrix consisting of partial correlation coefficients. It is zero on diagonal entries and for non-diagonal entries we have 
\begin{align}
\label{partialr}
r_{ij}\triangleq & \frac{\Sigma(i,j|V \setminus{ \lbrace i,j \rbrace })}{\sqrt{\Sigma(i,i|{V\setminus { \lbrace i,j \rbrace}})\Sigma(j,j|{V \setminus { \lbrace i,j \rbrace}})}}\nonumber \\
=&-\frac{J(i,j)}{\sqrt{J(i,i)J(j,j)}}.
\end{align}  
$r_{ij}$, the \textit{partial correlation coefficient} between variables $X_i$ and $X_j$ for $i \neq j$, measures their conditional covariance given all other variables~\cite{Lauritzen:book}.\\
Whether we use the exact or approximate neighborhood relationship, the Markov graph of bus phase angles is an example of bounded local path graphs that satisfy local separation property. We also checked the analyzed networks for walk-summability condition. As shown in \eqref{partialr} and definition of $\alpha$-walk summablity, this property depends only on $J$ matrix and thus on the topology of the grid. $\alpha$-walk summablity does not depend on operating point of the grid. \\
It is shown in~\cite{AnimaW} that under walk summability, the effect of faraway nodes on covariance decays exponentially with distance and the error in approximating the covariance by local neighboring decays exponentially with distance. So by correct tuning of the threshold $\xi_{n,p}$ and with enough samples, we expect the output of CCT method to follow the grid structure.\\
It is worth mentioning that when we use CCT method for structure learning of phasor data, our method is robust against measurement noise. The reason is that CCT analyzes conditional covariance of its input data. Measurement noise is white noise and in addition uncorrelated from the data. As a result, it does not change the conditional correlation between phasor data. Thus, our method is immune to measurement noise.\\
CCT distributes the edges fairly uniformly across the nodes, while the $\ell_1$ method tends to cluster all the edges together between the ``dominant'' variables leading to a densely connected component and several isolated points~\cite{AnimaW}. Therefore, CCT is more suitable for detecting the structure of the power grid where the edges are distributed over the network. It should be noted that the computational complexity of CCT is $O(p^{\eta+2})$, which is efficient for small $\eta$~\cite{AnimaW}. $\eta$ is the parameter associated with local separation property described above. \\
The sample complexity associated with CCT method is $n=\Omega(J_{min}^{-2}\log{p})$, where $J_{min}$ is the minimum absolute edge potential in the model~\cite{AnimaW}.

\subsection{Decentralization} 
We want to find the Markov graph of our bus phasor measurements. Since we have made the connection between electrical connectivity and correlation, this helps us to decentralize our method to a great extent. 
We consider the power network in its normal operating condition. 
It consists of different areas connected together via border nodes. 
Therefore, we decompose our network into these sub-areas. 
Our method can be performed locally in the sub-networks. 
The sub-network connection graph is available online from the protection system at each sub-network 
and can be readily compared with the bus phase angle Markov graph. 
In addition, only for border nodes we need to consider their out-of-area neighbors as well. 
This can be done either by solving the power flow equations for that border link 
or by receiving measurements from neighbor sub-networks.
Therefore, we run CCT for each sub-graph to figure out its Markov graph. Then we compare it with online network graph information to detect false data injection attack.\\
This decentralization reduces complexity and increases speed. Our decentralized method is a substitute for considering all measurements throughout the power grid, which requires a huge amount of data exchange and computation. In addition to having less nodes to analyze, this decentralization leads us to a smaller $\eta$ and greatly reduces computational complexity, which makes our method capable of being executed in very large  networks.
Furthermore, since structure learning is performed locally, non-linear faraway relationships intrinsic to power systems do not play a role and our assumptions remain valid.
Moreover, utility companies are not willing to expose their information for economical competition reasons and there have been several attempts to make them do that~\cite{Sankar}. Thus it is desired to reduce the amount of data exchange between different areas and our method adequately fulfills this requirement.\\

%
\subsection{Online calculations}
For fast monitoring the power grid, we need an on-line algorithm. 
As we show in this section, our algorithm can be developed as an iterative method that processes new data without the need for reprocessing earlier data.
Here, we derive an iterative formulation for the sample covariance matrix. 
Then we use it to calculate the conditional covariance using
\begin{align*}
\widehat{\Sigma}(i, j|S) := \widehat{\Sigma}(i, j)- \widehat{\Sigma}(i, S)\widehat{\Sigma}^{-1}
(S, S)\widehat{\Sigma}(S, j).
\end{align*}
 As we know, in general
\begin{align*}
\Sigma= E[(X-\mu)(X-\mu)^T]=E[XX^T]-\mu\mu^T.
\end{align*}
 Let $\widehat{\Sigma}^{(n)}(X)$ denote the sample covariance matrix for a vector $X$ of $p$ elements from $n$ samples and let $\widehat{\mu}^{(n)}(X)$ be the corresponding sample mean. 
In addition, let $X^{(i)}$ be the $i$th sample of our vector. Then we have
 \begin{align}
 \label{iter}
\widehat{\Sigma}^{(n)}(X)=\frac{1}{n-1}\sum_{i=1}^n X^{(i)}{X^{(i)}}^T-\widehat{\mu}^{(n)}{\widehat{\mu}^{(n)^T}}.
 \end{align}
Therefore,
 \begin{align}
 \label{update}
\widehat{\Sigma}^{(n+1)}(X)=\frac{1}{n}[\sum_{i=1}^n X^{(i)}{X^{(i)}}^T+X^{(n+1)}{X^{(n+1)}}^T] -\widehat{\mu}^{(n+1)}{\widehat{\mu}^{(n+1)^T}}, 
 \end{align}
 where
 \begin{align}
\label{samplemean}
{\widehat{\mu}^{(n+1)}} =\frac{1}{n+1}[n{\widehat{\mu}^{(n)}}+X^{(n+1)}].
 \end{align}
By keeping the first term in~\eqref{iter} and the sample mean~\eqref{samplemean}, our updating rule is~\eqref{update}. Thus, we revise the sample covariance as soon as any bus phasor measurements changes and leverage it to reach conditional covariances needed for CCT.
It goes without saying that if the system demand and structure does not change and the system is not subject to false data injection attack, the voltage angles at nodes remain the same and there is no need to run any  algorithm.
\section{Stealthy Deception Attack}
The most recent and most dreaded false data injection attack on the power grid was introduced in~\cite{Teixreia2011}. It assumes knowledge of bus-branch model of the system and is capable of deceiving the state estimator, damaging power network observatory, control, monitoring, demand response and pricing schemes~\cite{kosut2010malicious}.

For a $p$-bus electric power network, the $l=2p-1$ dimensional 
state vector $x$ is $(\theta^T,V^T)^T$, where  $V=(V_1,...,V_p)$ is the vector of voltage bus magnitudes and $\theta=(\theta_2,...,\theta_p)$ the vector of phase angles. It is assumed that the nonlinear measurement model for state estimation is defined by $z=h(x)+\epsilon$,
where $h(.)$ is the measurement function, $z=(z_P,z_Q)$ is the measurement vector consisting of active and reactive power flow measurements and $\epsilon$ is the measurement error. $H(x^k):=\frac{dh(x)}{dx}|_{x=x^k}$ denotes the Jacobian matrix of the measurement model $h(x)$ at $x^k$.\\
The goal of the stealthy deception attacker is to compromise the measurements available to the State Estimator (SE) such that
$z^a=z+a$,
where $z^a$ is the corrupted measurement and $a$ is the attack vector. Vector $a$ is designed such that the SE algorithm converges and the attack $a$ is undetected by the Bad Data Detection scheme. 
Then it is shown that, under the DC power flow model,
such an attack can only be performed locally with $a \in \mathrm{Im}(H)$,  
where $H=H_{P\theta}$ is the matrix connecting the vector of bus injected powers to the vector of bus phase angles, i.e., $P=H_{P\theta} \theta$. The attack is shown in Figure\ref{fig:SE}.
\begin{figure}[t]
\centering
\captionsetup{type=figure}
\includegraphics[width=3.5in]{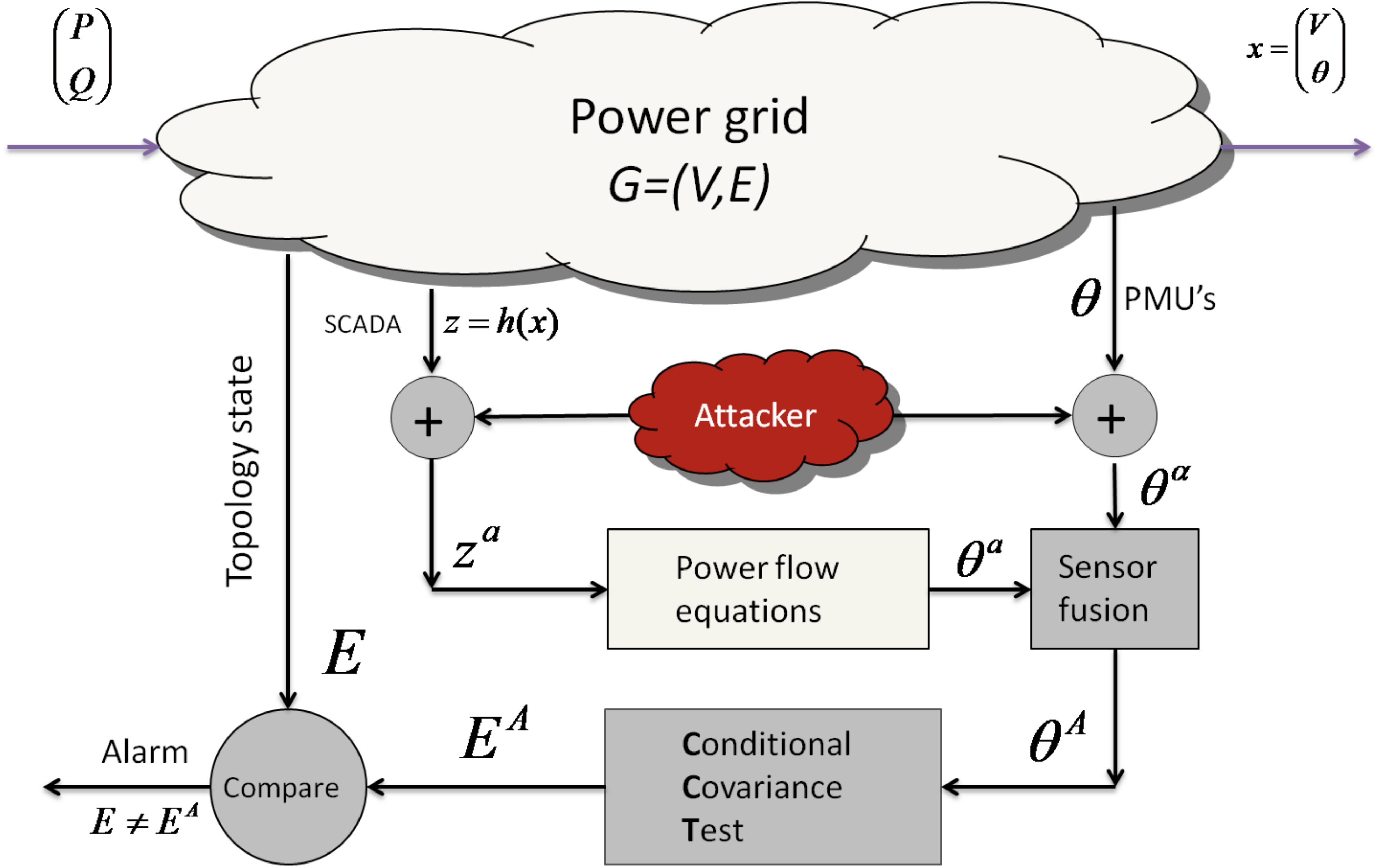}
\caption{\small Power grid under a cyber attack}
\label{fig:SE}
\end{figure}
%
%
\section{Stealthy Deception Attack Detection}
As mentioned earlier, the fundamental idea behind our detection scheme is that of structure learning. 
Our learner, the CCT method, is first tuned with correct data representing the data structure, 
which corresponds to the grid graph. Therefore, any attack that changes the structure alters the output of CCT method and this triggers the alarm.
Let us consider the aforementioned attack more specifically. As we are considering the DC power flow model 
and all voltage magnitudes are considered to be 1 p.u., 
the state vector introduced in~\cite{Teixreia2011} reduces to the vector of voltage angles, $X$. Since 
$a \in \mathrm{Im}(H)$, $\exists d$ such that $a=Hd$ and
\begin{align*}
z^a=z+a=H(X+d)=HX^a,
\end{align*}
where $X^a$ represents the vector of angles when the system is under attack,
$z^a$ is the attacked measurement vector and $X$ is the actual phasor angle vector. Considering~\eqref{two}, we have $H_{ij}=-b_{ij}$ for $i \neq j$ and $H_{ii}=\sum_{i\neq j}b_{ij}$, where $b_{ij}$ denotes the inverse of the line inductive reactance. We have
\begin{align}
\label{Xa}
X^a=X+d=H^{-1}P+H^{-1}a=H^{-1}(P+a).
\end{align}
As the definition of $H$ matrix shows, it is of rank $p-1$. Therefore, the above $H^{-1}$ denotes the pseudo inverse of $H$ matrix. Another way to address this singularity is to remove the row and column associated with slack bus. \\
From~\eqref{Xa},
\begin{align*}
\Sigma{(X^a,X^a)}= H^{-1}[\Sigma(P+a,P+a)]{H^{-1}}^T
= H^{-1}[\Sigma(P,P)+\Sigma(a,a)]{H^{-1}}^T.
\end{align*}
The above calculation assumes the attack vector being independent of current values in the network, 
as demonstrated in definition of the attack~\cite{Teixreia2011}. \\
An attack is considered successful if it causes the operator to  make a wrong decision. For that matter, the attacker would not insert just one wrong sample. In addition, if the attack vector remains constant, it does not cause any reaction. This eliminates the cases of constant attack vectors. Therefore, the attacker is expected to insert random vectors $a$ during some samples. Thus $ \Sigma(a,a) \neq 0 $ and
\begin{align}
\label{attack_d}
\Sigma{(X^a,X^a)} \neq \Sigma(X,X).
\end{align} 
It is not difficult to show that if we remove the assumption on independence of attack vector and injected power, \eqref{attack_d} still holds.\\
Considering~\eqref{attack_d} and the fact that matrix inversion enjoys uniqueness property, this means that in case of an attack, the new $\Sigma^{-1}$ will not be the same as network's $J$ matrix in normal condition, i.e.,
$\Sigma^{-1}{(X^a,X^a)} \neq J_{normal}$, 
and as a result,
the output of CCT method will not follow the grid structure.

We use this mismatch to trigger the alarm. It should be noted that acceptable load changes do not change the Markov graph and as a result do not lead to false alarms. The reason is that such changes do not falsify DC power flow assumption and the Markov graph will continue to follow the defined information matrix.\\
After the alarm is triggered, the next step is to find which nodes are under attack.

\subsection{Detecting the Set of Attacked Nodes}
We use the \textit{correlation anomaly} metric~\cite{anomaly} to find the attacked nodes. This metric quantifies the contribution of each random variable to the difference between probability densities in two cases while considering the sparsity of the structure. Kullback-Leibler (KL) divergence is used as the measure of difference. As soon as an attack is detected, we use the attacked information matrix and the information matrix corresponding to the current topology of the grid to compute the anomaly score for each node. The nodes with highest anomaly score are announced as the nodes under attack. We investigate the implementation details in the next section. 

It should be noted that the attack is performed locally and because of local Markov property, we are certain that no nodes from other sub-graphs contributes to the attack.

We should emphasize that the considered attack assumes the knowledge of the system's bus-branch model. 
Therefore, the attacker is equipped with very critical information. Yet, we can mitigate such an ''intelligent'' attack.
%
\subsection{Reactive power versus voltage amplitude}  \label{sec:AC}
As mentioned before, with similar calculations, we can consider the case where the attacker manipulates reactive power data to lead the state estimator to wrong estimates of voltage. Such an attack can be designed to fake a voltage collapse or tricking the operator to cause a change in the normal state of the grid. For example, if the attacker fakes a decreasing trend in voltage magnitude of a part of the grid, the operator will send more reactive power to that part and thus this causes voltage overload/underload. At this point, the protection system disconnects the corresponding lines. This can lead to outage in some areas and in worse cases to overloading in other parts of the grid that might cause blackouts and cascading events. The detection can be done by linearization of AC power flow and considering the fluctuations around steady state. Then following our algorithm, it readily follows that such an attack can also be detected following a similar approach as we developed here for bus phase angles and active power. 

In this section we show how this analogy can be established.
The AC power flow states that the real power and the reactive power flowing from bus $i$ to bus $j$ are, respectively,  
\begin{align*}
&P_{ij} =G_{ij}V^2_i -G_{ij}V_iV_j\cos (\theta _i - \theta _j) + b_{ij}V_iV_j\sin( \theta _i - \theta _j ),\\
&Q_{ij} = b_{ij}V^2_i -b_{ij}V_iV_j\cos(\theta _i -\theta _j) -G_{ij}V_iV_j \sin(\theta _i-\theta _j),
\end{align*}
where $V_i$ and $\theta_i$ are the voltage magnitude and phase angle, resp., at bus \#i and 
$G_{ij}$ and $b_{ij}$ are the conductance and susceptance, resp., of line $ij$. 
From ~\cite{Reza2010}, we obtain the following approximation of the AC {\it fluctuating} power flow:
\begin{align*}
&\widetilde{P}_{ij}=(b_{ij}\overline{V}_i \overline{V}_j\cos \overline{\theta}_{ij})(\widetilde{\theta}_i-\widetilde{\theta}_i),\\
&\widetilde{Q}_{ij}=(2b_{ij}\overline{V}_i -b_{ij}\overline{V}_j\cos \overline{\theta}_{ij})\tilde{V}_i-(b_{ij}\overline{V}_i\cos\overline{\theta}_{ij})\widetilde {V}_j,
\end{align*}
where bar denotes steady-state value, tilde means fluctuation around the steady-state value, and $\overline{\theta}_{ij}=\overline{\theta}_{i}-\overline{\theta}_{j}$. These fluctuating values 
due to renewables and variable loads justify the utilization of probabilistic methods in power grid problems.\\
Now assuming that for the steady-state values of voltages we have $\overline{V}_i=\overline{V}_j \simeq 1 p.u.$ 
(per unit), and the fluctuations in angles are about the same such that $\cos \theta_{ij}=1$, we have
\begin{align}
\sublabon{equation}
\label{P}
\widetilde{P}_{ij}=b_{ij}(\widetilde{\theta}_i-\widetilde{\theta}_j),\\
\label{Q}
\widetilde{Q}_{ij}=b_{ij}(\widetilde{V}_i-\widetilde{V}_j).
\end{align}
\sublaboff{equation}
It is clear from~\eqref{P}-\eqref{Q} that we can follow the same discussions we had about real power and voltage angles, with reactive power and voltage magnitudes.\\
It can be argued that, as a result of uncertainty, the aggregate reactive power at each bus can be approximated as a Gaussian random variable and, because of Equation \eqref{Q}, 
voltage fluctuations around the steady-state value can be approximated as Gaussian random variables. 
Therefore, the same path of approach as for phase angles can be followed to show the GMRF property for voltage amplitudes. 
Comparing \eqref{Q} with \eqref{one} makes it clear that the same matrix, i.e., 
the $B$ matrix developed in Section~\ref{sec:Bmatrix}, 
is playing the role of correlating the voltage amplitudes; 
therefore, assuming that the statistics of the active and reactive power fluctuations are similar, 
the underlying graph is the same. This can be readily seen by comparing \eqref{P} and \eqref{Q}. 
\section{Simulation}
In this section, we present experimental results. We consider IEEE 14-bus system as well as IEEE-30 bus system. First, we feed the system with Gaussian demand and simulate the power grid. 
We use MATPOWER~\cite{MATPOWER} for solving the DC power flow equations for various demand and use the resulting angle measurements as the input to CCT algorithm. We leverage YALMIP~\cite{YALMIP} and SDPT3~\cite{sdpt3} to perform CCT method in MATLAB.

With the right choice of parameters and threshold, and enough measurements, the Markov graph follows the grid structure. We use the edit distance metric for tuning the threshold value. The edit distance between two graphs reveals the number of different edges in graphs, i.e., edges that exist in only one of the two graphs.

After the threshold is set, our detection algorithm works in the following manner.  Each time the procedure is initiated, i.e., any PMU angle measurement or state estimator output changes, it updates the conditional covariances based on new data, runs CCT and checks the edit distance between the Markov graph of phasor data and the grid structure. A discrepancy triggers the alarm. Subsequently, the system uses  metric to find all the buses under the attack. The flowchart of our method is shown in Figure~\ref{fig:flowchart}.

Next we introduce the stealthy deception attack on the system. The attack is designed according to the description above, i.e., it is a random vector such that $a \in \mathrm{Im}(H)$.
The attack is claimed to be successful only if performed locally on connected nodes. Having this constraint in mind, for IEEE-14 test case the maximum number of attacked nodes is six and for IEEE 30-bus system this number is eight.  For IEEE-14 network, we consider the cases where two to six nodes are under attack. For IEEE-30 network, we consider the cases where two to eight nodes are under attack. For each case and for each network, we simulate all possible attack combinations. This is to make sure we have checked our detection scheme against all possible stealthy deception attacks. Each case is repeated 1000 times for different attack vector values.

When the attacker starts tampering with the data, the corrupted samples are added to the sample bin of CCT and therefore they are used in calculating the sample covariance matrix. With enough number of corrupted samples, our algorithm can be arbitrarily close to $100\%$ successful in detecting all cases and types of attacks discussed above, for both IEEE-14 and IEEE-30 bus systems. 

The reason behind the trend shown in Figure 3 is that the Markov graph is changing from following the true information matrix towards showing some other random relationship that the attacker is producing. When the number of compromised  samples increases, they gain more weight in the sample covariance. Hence, the chance of change in Markov graph and detection increases.
 The minimum number of corrupted samples for having almost $100\%$ detection rate for IEEE 14-bus system is 130 and it is 50 for IEEE 30-bus system. Since IEEE-$30$ is more sparse compared to IEEE 14-bus system, our method performs better in the former case.Yet, 
for a 60 Hz system, detection speed for IEEE 14-bus system is quite amazing as well.

Another interesting fact is the detection rate trend as the number of corrupted measurements increases. This is shown in Figure~\ref{fig:drate} for IEEE 14-bus system. Detection rate is averaged over all possible attack scenarios. It can be seen that even for small number of corrupted measurements our method presents a good performance. The detection rate is $90\%$ with $30$ corrupted samples. 
\begin{figure}[t]
\centering
\captionsetup{type=figure}
\includegraphics[width=3.7in]{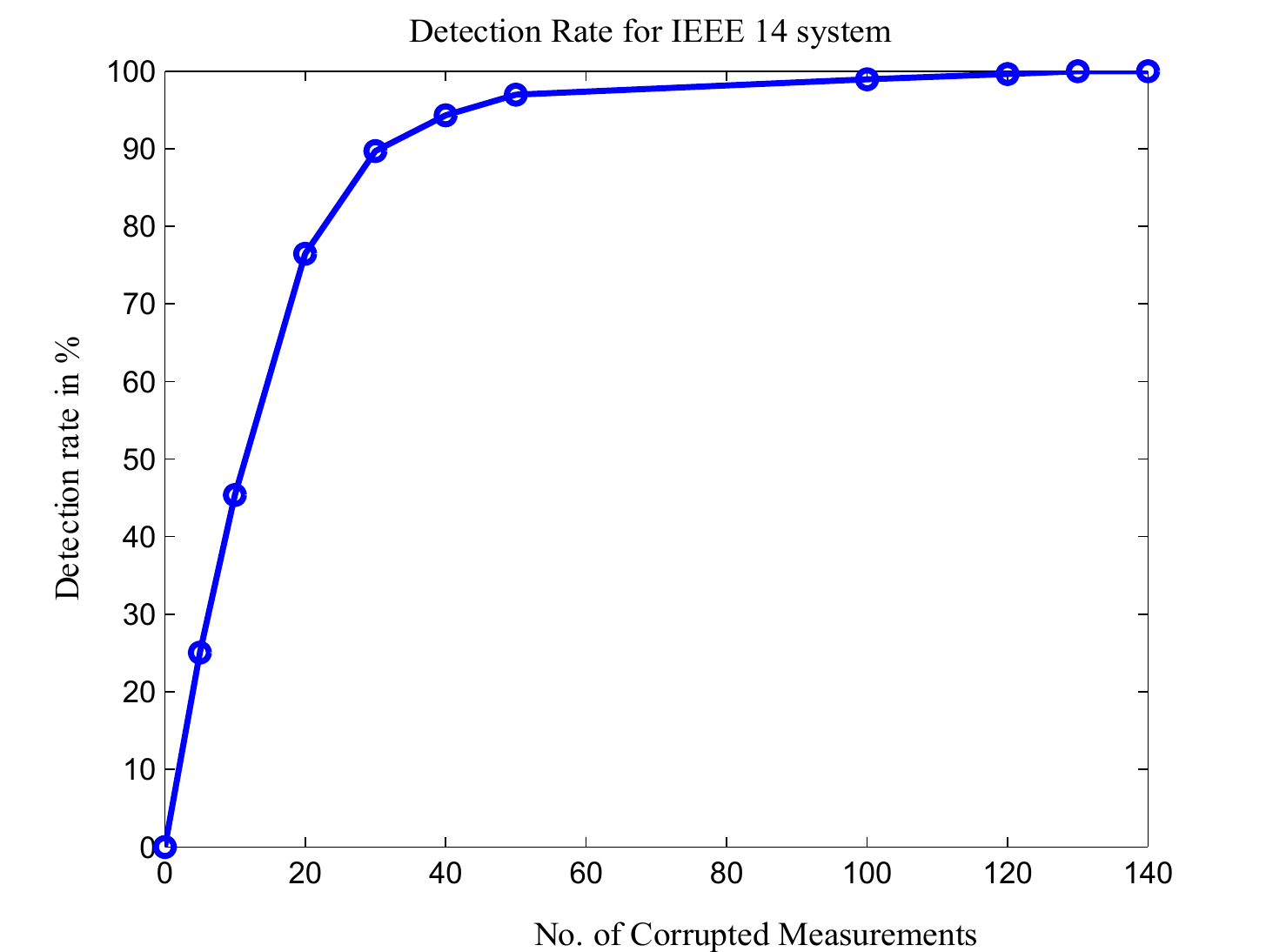}
\caption{\small Detection rate for IEEE 14-bus system}
\label{fig:drate}
\end{figure}

The next step is to find which nodes are attacked. 
As stated earlier, we use anomaly score metric~\cite{anomaly} to detect such nodes.
As an example, Figure~\ref{fig:point7} shows the anomaly score plot for 
the case where nodes $4, 5$ and $6$ are under the attack\footnote{The numbering system employed here is the one of the publised IEEE 14 system available at \url{https://www.ee.washington.edu/research/pstca/pf14/pg_tca14bus.htm}}. It means that a random vector is added to measurements at these nodes. This attack is repeated $1000$ times for different values building an attack size of $0.7$. Attack size refers to the expected value of the Euclidean norm of the attack vector.
\begin{figure}[b]
\centering
\captionsetup{type=figure}
\begin{psfrags}
\psfrag{Node Number}{Node Number}
\includegraphics[width=6in]{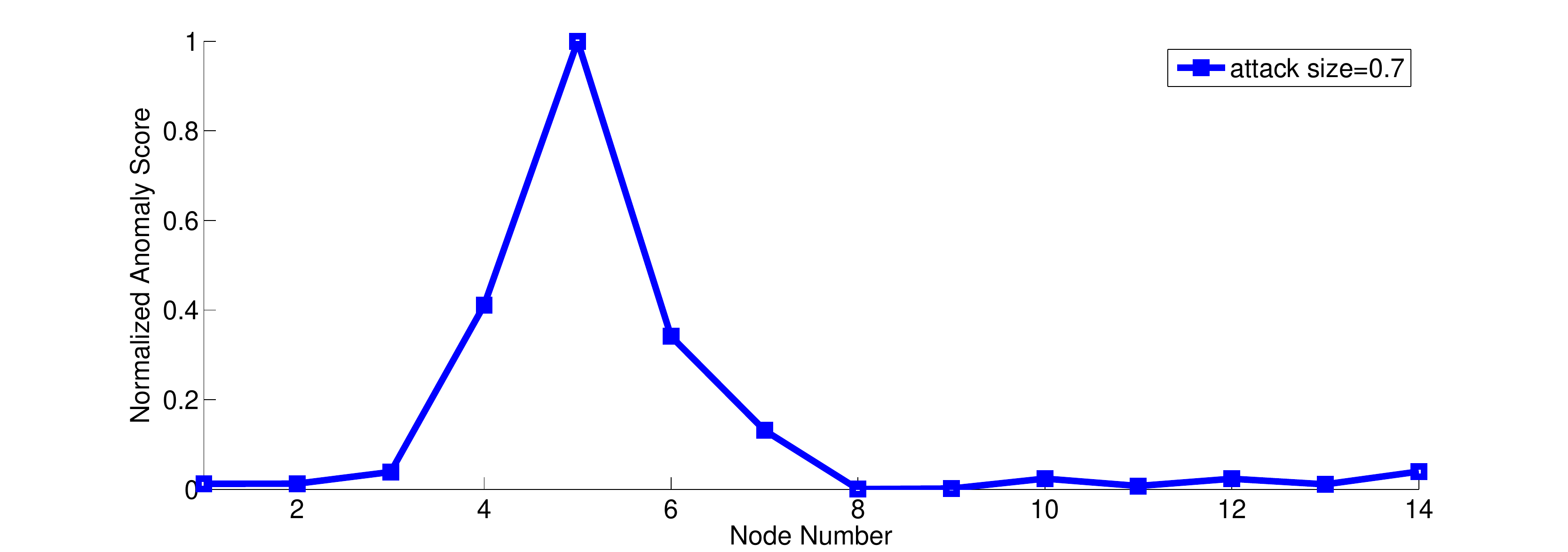}
\end{psfrags}
\caption{\small Anomaly score for IEEE 14-bus system, nodes 4, 5, 6 are under attack; Attack size is 0.7.}
\label{fig:point7}
\end{figure}
Simulation results show that as the attack size increases, the difference between anomaly score of the nodes under the attack and the uncompromised nodes increases and, as a result, it is easier to pinpoint the attacked nodes. For example, Figure~\ref{fig:3cases} compares the cases where the attack size is $1$, $0.7$ and $0.5$ for the previous attack scenario, i.e., nodes 4, 5, 6 are under attack.

It should be noted that in order for an attack to be successful in misleading the power system operator, the attack size should not be too small. More specifically, the attacker wants to make a change in the system state such that the change is noticeable, it results in wrong estimation in part of the grid or it triggers a reaction in the system. If the value of the system state under the attack is close to its real value, the system is not considered under the attack as it continues its normal operation.
It can be seen that, even for the smallest possible attack size 
that would normally not lead to operator to react, the anomaly score plot will remain reliable.
For example, in the considered attack scenario, the anomaly plot performs well even for an attack size of $0.3$, while it seems that a  potentially successful attack under normal standards needs a bigger attack size. 

\begin{figure}[t]
\begin{center}
\includegraphics[width=6in]{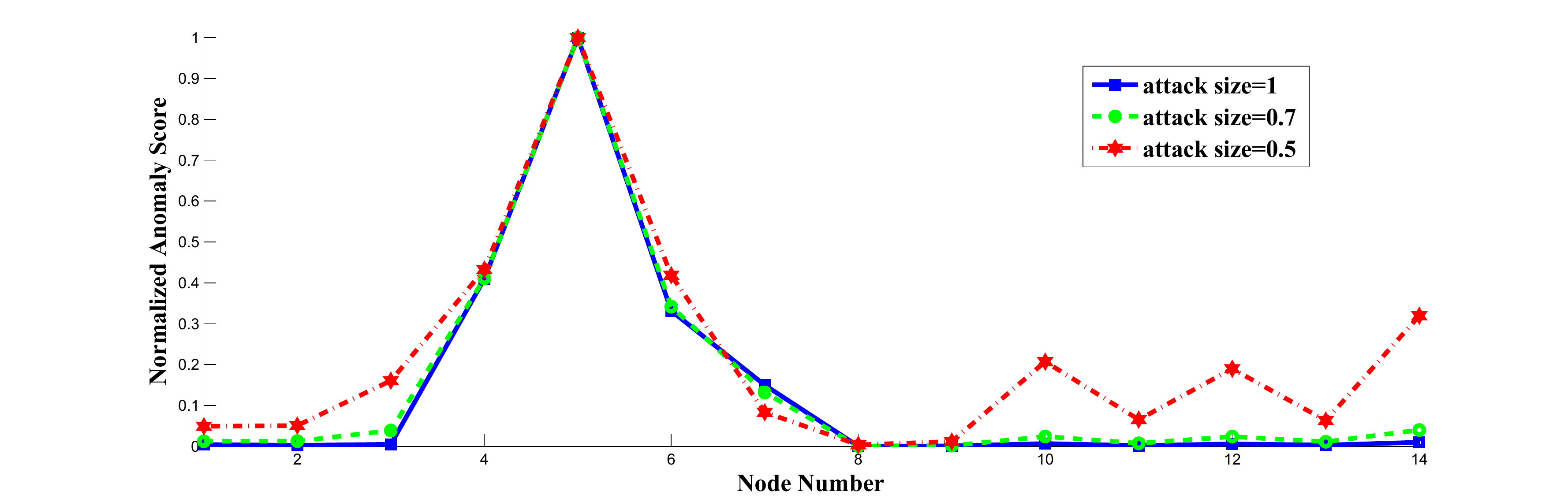}
\caption{\small Anomaly score for IEEE 14-bus system for different attack sizes. Nodes 4, 5, 6 are under attack. Attack sizes are 0.5, 0.7, 1.}
\label{fig:3cases}
\end{center}
\end{figure}
There is another important issue in setting the threshold for anomaly score. 
As can be seen in the above plots, a threshold of $0.3$ can give us a detection rate of $100\%$. If we increase the threshold, both detection rate and false alarm rate decrease, but for this application we want the design to satisfy the $100\%$ detection rate with a very low false alarm rate. It can readily be seen that our expectations are met. Simulation results for different attacks on IEEE 14-bus system shows that this threshold guarantees $100\%$ detection rate with a very low false alarm rate of $3.82\times 10^{-5}$. 

\section{Discussion and Conclusion}
We proposed a decentralized false data injection attack detection scheme that is capable of detecting the most recent stealthy deception attack on the smart grid SCADA. To the best of our knowledge, our remedy is the first to comprehensively detect this sophisticated attack. In addition to detecting the attack state, our algorithm is capable of pinpointing the set of attacked nodes. 

As stated earlier, the computational complexity of our method is polynomial and the decentralized property makes our scheme suitable for huge networks, yet with bearable complexity and run time. 
Consequently, our approach can be extended to bigger networks, namely IEEE-118 and IEEE-300 bus systems.

We also argued that our method is capable of detecting attacks that manipulate reactive power measurements to cause inaccurate voltage amplitude data. The latter attack scenario is also important as it can lead to, 
or mimic, a voltage collapse.

In conclusion, our method fortifies the power system against a large class of false data injection attacks 
and our technique could becomes essential for current and future grid reliability, security and stability.

We introduced change detection for graphical model of the system and showed that it can be leveraged to detect attacks on the system and controlling scheme. This fortifying scheme is very crucial to  maintain reliable control and stability. Change detection is a very well-known concept in control literature. Nevertheless, change detection in graphical models and using it for detecting the attacks is the new concept and our contribution to the field.

\subsection*{Acknowledgment}
We acknowledge detailed discussions with Majid Janzamin and thank him for valuable comments on graphical model selection and Conditional Covariance Test. The authors thank Reza Banirazi for discussions about power network operation and Phasor Measurement Units. We thank Anima Anandkumar for valuable insights into local separation property and walk summability.


\begin{thebibliography}{10}

\bibitem{state}
A.~Abur and A.G Exposito.
\newblock {\em {Power System State Estimation, Theory and Implementation}}.
\newblock Marcel Dekker, 2004.

\bibitem{AnimaW}
A.~Anandkumar, V.~Tan, F.~Huang, and A.S. Willsky.
\newblock High-dimensional gaussian graphical model selection:walk summability
  and local separation criterion.
\newblock {\em Journal of Machine Learning}, June 2012.
\newblock accepted.

\bibitem{Reza2010}
R.~Banirazi and E.~Jonckheere.
\newblock Geometry of power flow in negatively curved power grid.
\newblock In {\em 49th IEEE Conference on Decision and Control}, pages
  6259--6264, Dec 17-20 2010.

\bibitem{Wiesel}
R.S~Blum C.~Wei, A.~Wiesel.
\newblock Change detection in smart grids using errors in variables models.
\newblock In {\em Sensor Array and Multichannel Signal Processing Workshop
  (SAM), 2012 IEEE 7th}, pages 16--20, June 17-20 2012.

\bibitem{diao}
R.~Diao, K.~Sun, V.~Vittal, R.~OKeefe, M.~Richardson, N.~Bhatt, D.~Stradford,
  and S.~Sarawgi.
\newblock Decision tree-based online voltage security assesment using pmu
  measurements.
\newblock {\em IEEE Transactions on Power Systems}, 24:832--839, May 2009.

\bibitem{Friedman&etal:07}
J.~Friedman, T.~Hastie, and R.~Tibshirani.
\newblock {Sparse inverse covariance estimation with the graphical lasso}.
\newblock {\em Biostatistics}, 2007.

\bibitem{berkeley}
A.~Giani, E.~Bitar, M.~Garcia, M.~McQueen, P.~Khargonekar, and K.~Poolla.
\newblock {Smart Grid Data Integrity Attacks}.
\newblock {\em IEEE Transactions on Smart Grid}, 1(1), January 2012.

\bibitem{hezhangJ}
M.~He and J.~Zhang.
\newblock A dependency graph approach for fault detection and localization
  towards secure smart grid.
\newblock {\em IEEE Transactions on Smart Grid}, 2:342--351, June 2011.

\bibitem{Sandberg}
J.~M. Hendrickx, K.~H. Johansson, R.~M. Jungers, H.~Sandberg, and K.~C. Sou.
\newblock Efficient computation of a security index for false data attacks in
  power networks.
\newblock 2013.
\newblock Preprint. ArXiv:1204.6174v2, Feb. 2013.

\bibitem{anomaly}
N.~Liu Id´e T.~Lozano, A.C.~Abe.
\newblock Proximity-based anomaly detection using sparse structure learning.
\newblock In {\em SIAM International Conference on Data Mining}, Philadelphia,
  2009.

\bibitem{JanzaminAnandkumar:CovDecomp2012ArXiv}
M.~Janzamin and A.~Anandkumar.
\newblock High-dimensional covariance decomposition into sparse markov and
  independence models.
\newblock 2012.
\newblock Preprint. ArXiv:1211.0919, Nov. 2012.

\bibitem{kosut2010malicious}
Oliver Kosut, Liyan Jia, Robert~J Thomas, and Lang Tong.
\newblock Malicious data attacks on smart grid state estimation: Attack
  strategies and countermeasures.
\newblock In {\em Smart Grid Communications (SmartGridComm), 2010 First IEEE
  International Conference on}, pages 220--225. IEEE, 2010.

\bibitem{Lauritzen:book}
S.L. Lauritzen.
\newblock {\em {Graphical models: Clarendon Press}}.
\newblock Clarendon Press, 1996.

\bibitem{YALMIP}
J.~Lofberg.
\newblock {YALMIP: A Toolbox for Modeling and Optimization in MATLAB.}
\newblock In {\em IEEE international symposium on Computer Aided Control
  Systems Design (CACSD)}, September 2004.
\newblock Available from http://users.isy.liu.se/johanl/yalmip/.

\bibitem{Luettgen1993}
M.~Luettgen, W.~Karl, A.~Willsky, and R.~Tenney.
\newblock Multiscale representations of markov random fields.
\newblock {\em IEEE Transaction on Signal Processing}, 41:3377–3396, Dec
  1993.

\bibitem{MATPOWER}
C.~E. Murillo-Sánchez R.~D.~Zimmerman and R.~J. Thomas.
\newblock Matpower steady-state operations, planning and analysis tools for
  power systems research and education.
\newblock {\em Power Systems, IEEE Transactions on}, 26(1):12--19, Feb. 2011.

\bibitem{Ravikumar&etal:08Arxiv}
P.~Ravikumar, M.J. Wainwright, G.~Raskutti, and B.~Yu.
\newblock {High-dimensional covariance estimation by minimizing
  $\ell_1$-penalized log-determinant divergence}.
\newblock {\em Electronic Journal of Statistics}, (4):935--980, 2011.

\bibitem{Sankar}
S.~Mohajer S.~R.~Rajagopalan, L.~Sankar and H.~V. Poor.
\newblock Smart meter privacy: A utility-privacy framework.
\newblock In {\em 2nd Annual IEEE Conference on Smart Grid Communications},
  Brussels, Belgium, Oct 17-20 2011.

\bibitem{sedghi:bookchapter}
H.~Sedghi and E.~Jonckheere.
\newblock {On Conditional Mutual Information in Gauss-Markov Structured Grids}.
\newblock In G.~Como, B.~Bernhardson, and A.~Rantzer, editors, {\em Lecture
  notes in Control and Information Sciences}, volume 450, pages 277--297.
  Springer, 2014.

\bibitem{Teixreia2011}
A.~Teixeira, G.~Dan, H.~Sandberg, and K~H. Johansson.
\newblock {A Cyber Security Study of a SCADA Energy Management System: Stealthy
  Deception Attacks on the State Estimator}.
\newblock In {\em IFAC World Congress}, September 2011.

\bibitem{sdpt3}
K.~C. Toh, M.J. Todd, and R.~H. Tutuncu.
\newblock {SDPT3 - a MATLAB software package for semidefinite programming}.
\newblock {\em Optimization Methods and Software}, 11:545--581, 1999.

\bibitem{Giannakis}
H.~Zhu and G.~B. Giannakis.
\newblock {Sparse Overcomplete Representations for Efficient Identification of
  Power Line Outages}.
\newblock to appear in {\em IEEE Tran. on Power Systems}, 2012.

\end{thebibliography}

\end{document}